\newtheorem{theorem}{Theorem}
\newtheorem{proposition}{Proposition}
\newtheorem{remark}{Remark}
\newcommand{\tab}{\hspace*{2em}}
\newcommand{\EE}{\mathop{\mathbb{E}}}
\newcommand{\RR}{\mathbb{R}}
\renewcommand{\vec}[1]{\boldsymbol{\mathrm{#1}}}
\newcommand{\xv}{\vec{x}}
\newcommand{\zv}{\vec{z}}
\newcommand{\vv}{\vec{v}}
\newcommand{\av}{\vec{a}}
\newcommand{\GRW}{\textrm{GRW}}
\title{Smooth Regularization for Efficient Video Recognition}
\author{%
  Gil Goldman \\
  Computer Science Department \\
  Carnegie Mellon University \\
  \texttt{gilg@andrew.cmu.edu}
  \And
  Raja Giryes \\
  School of Electrical and Computer Engineering \\
  Tel-Aviv University \\
  \texttt{raja@tauex.tau.ac.il}
  \And
  Mahadev Satyanarayanan \\
  Computer Science Department \\
  Carnegie Mellon University \\
  \texttt{satya@cs.cmu.edu}
}
\begin{document}

\maketitle

\begin{abstract}
We propose a smooth regularization technique that instills
a strong temporal inductive bias in video recognition models,
particularly benefiting lightweight architectures.
Our method encourages smoothness in the intermediate-layer embeddings of
consecutive frames by modeling their changes as a Gaussian Random Walk (GRW).
This penalizes abrupt representational shifts,
thereby promoting low-acceleration solutions that better align
with the natural temporal coherence inherent in videos.
By leveraging this enforced smoothness, lightweight models can more
effectively capture complex temporal dynamics.
Applied to such models, our technique yields a $3.8\%$–$6.4\%$
accuracy improvement on Kinetics-600.
Notably, the MoViNets model family trained with our smooth
regularization improves the current state-of-the-art by $3.8\%$–$6.1\%$ within
their respective FLOP constraints, while MobileNetV3 and
the MoViNets-Stream family achieve gains of $4.9\%$–$6.4\%$ over
prior state-of-the-art models with comparable memory footprints.
Our code and models are available at~\url{https://github.com/cmusatyalab/grw-smoothing}.
\end{abstract}
\section{Introduction}
\label{sec:intro}

Video recognition has rapidly evolved over the past decade,
with models becoming increasingly capable of learning sophisticated spatial and temporal representations.
However, despite remarkable advancements,
many architectures still suffer from overfitting or inefficient use of temporal information~\cite{fayyaz20213d,huang2018makes,wang2023videomae}.
In light of these challenges,
we introduce a novel smooth regularization approach
designed to instill a strong inductive bias specifically tailored to video data.
The key insight behind our method is
that video content often exhibits continuous motion and gradual changes in appearance,
suggesting that representations should vary smoothly over time.
By explicitly encouraging this smoothness,
we aim to guide neural networks toward more stable and generalizable internal feature representations,
ultimately leading to improved performance across a range of video recognition tasks.

Our regularization strategy focuses on the intermediate-layer embeddings
produced by a neural network when processing consecutive frames.
Instead of allowing these embeddings to fluctuate arbitrarily across frames,
we constrain their dynamics to resemble a Brownian motion,
which translates to imposing Gaussian Random Walk (GRW)
behavior in the frames discrete settings that promotes continuous and relatively modest rates of change.
The inspiration behind this modeling choice comes from the fact that, in a typical video,
adjacent frames exhibit only gradual shifts in object positioning, scale, lighting,
or motion.
By treating frame-to-frame representation shifts as a form of GRW, we incorporate a principled,
mathematically grounded way to preserve smoothness in the learned embeddings.
This not only reflects a more natural representation of videos
but also acts as a regularizing force against abrupt or erratic changes in the network’s internal states.

A core outcome of our approach is that it naturally discourages large jumps between successive embeddings.
Mathematically, our formulation involves adding the GRW penalty term to the training objective,
which grows whenever the model produces excessively rapid transitions between frames in the network embedding space.
By penalizing these abrupt shifts, we encourage networks to learn features that evolve more gently over time.
This favors solutions that maintain a sense of temporal consistency, that is, low acceleration in the embedding space,
leading to more coherent internal representations
that can better capture the true temporal dynamics present in real-world videos.

By aligning model training with the natural temporal structure found in the data,
our approach makes it easier to networks
to focus on learning meaningful temporal correlations rather than wasting capacity on fitting noisy or abrupt changes.
Consequently,
the model becomes more sensitive to subtle motion cues,
often crucial for recognizing fine-grained actions or micro-movement semantics
without sacrificing robustness to variations due to noise in the network embedding space.

While bigger network may have enough capacity
to learn both the variations and noise in the embedding space together with the changes in motion,
this is more challenging in resource constrained networks.
Thus, we focus in this work on such type of networks.
We demonstrate the effectiveness of our proposed smooth regularization on such lightweight models.
To verify its benefits, we trained these smaller-scale networks on the popular Kinetics-600 dataset,
a large benchmark known for its diversity of human action classes.
By simply adding our novel loss function to the training of these architectures,
we get consistent gains of $3.8\%$–$6.4\%$ in classification accuracy
as shown in Figure~\ref{fig:teaser},
leading to new state-of-the-art performance under FLOP and memory constraints.

\begin{figure}[tb]
    \centering
    \begin{minipage}{0.495\textwidth}
        \centering
        \includegraphics[width=\linewidth]{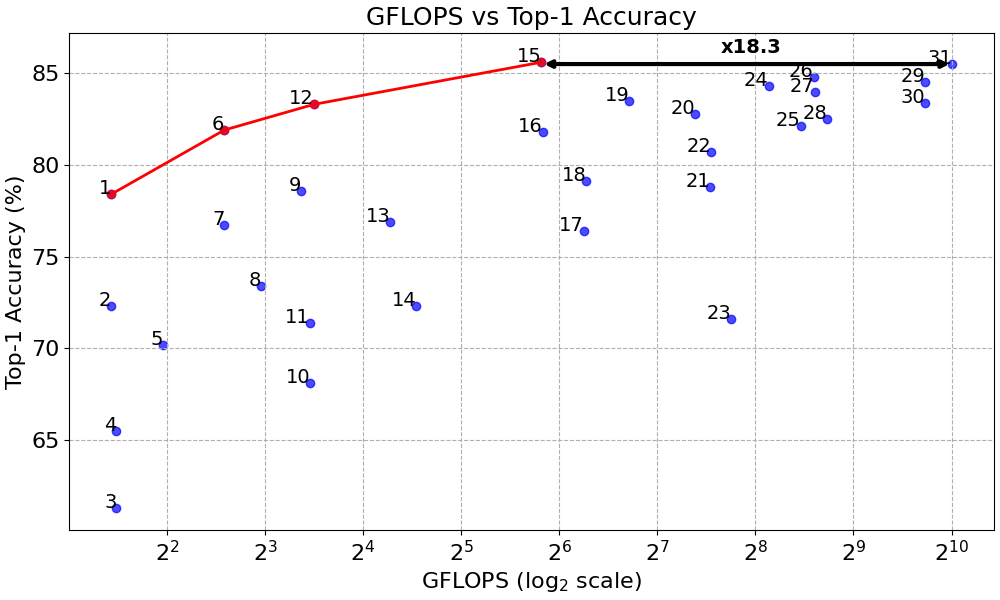}
    \end{minipage}
    \hfill
    \begin{minipage}{0.495\textwidth}
        \centering
        \includegraphics[width=\linewidth]{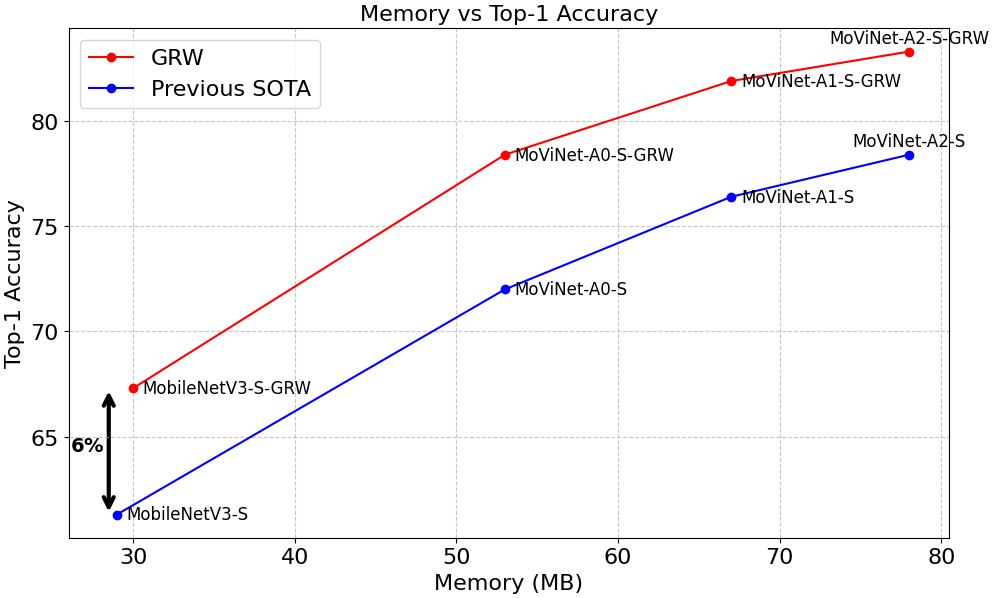}
    \end{minipage}
    \caption{\textbf{Performance Results on Kinetics-600.}
    By simply adding GRW-smoothing to existing models, we achieve significant improvements.
    \textbf{Left:} Accuracy vs.\ FLOPs, where each point corresponds to a published model
    (see Table~\ref{tab:k600.flops} for references).
    GRW-smoothing improves the state-of-the-art performance of efficient models by 3.8–6.1\%.
    Notably, MoViNet-A3-GRW achieves 85.6\% accuracy at just 56.4 GFLOPs,
    while the closest model, MViTv2-B-32×3, requires 18.3$\times$ more FLOPs.
    \textbf{Right:} Accuracy vs.\ Memory. $\GRW$-smoothing improves the state-of-the-art
          performance of memory-efficient models by 4.9–6.4\%.}
    \label{fig:teaser}
\end{figure}

%\begin{figure}[t]
%    \centering
%    \begin{minipage}{0.5\textwidth}
%        \centering
%        \includegraphics[width=\textwidth]{sec/teaser}
%    \end{minipage}
%    \hspace{-8mm}
%    \begin{minipage}{0.5\textwidth}
%        \centering
%        \includegraphics[width=\textwidth]{sec/mem}
%    \end{minipage}\label{fig:teaser}
%    \caption{}
%\end{figure}

Our main contributions can be summarized as follows: \\
\noindent \textbf{Smoothness Prior in Video Recognition}:
We introduce a novel regularization technique
that enforces smoothness in the intermediate-layer embeddings of consecutive video frames
by modeling their changes as a GRW.

\textbf{State-of-the-Art Performance Under Efficiency Constraints}:
Our technique outperforms current leading solutions within a similar memory and compute range,
confirming the broad applicability of our method to resource-limited scenarios.

\textbf{Flexible Framework}:
Focusing on smoothness as a strong inductive bias provides a plug-and-play regularization option for existing video recognition pipelines.
It integrates seamlessly into different architectures with a negligible computational overhead
as illustrated in Figure~\ref{fig:model}.
\section{Related Work}
\label{sec:related_work}

\paragraph{Video recognition in general.}
Early approaches to video action recognition learned spatiotemporal representations with 3D CNNs~\cite{carreira2017quo,tran2015learning},
with subsequent advances such as Two-Stream networks~\cite{simonyan2014two} and SlowFast~\cite{feichtenhofer2019slowfast} improving the balance between spatial semantics and motion modeling.
Temporal segment sampling~\cite{wang2016temporal} provided an efficient way to cover long videos with sparse clips.
With the advent of transformers, attention-based models extended self-attention to spatiotemporal tokens~\cite{bertasius2021space,arnab2021vivit}.
Subsequent work introduced hierarchical and multiscale designs and localized attention to improve efficiency~\cite{fan2021multiscale,liu2022videoswin},
and explored alternative attention mechanisms such as trajectory attention~\cite{patrick2021keeping}, as well as MLP-like backbones~\cite{zhang2022morphmlp} and video-specific ViT adaptations~\cite{sharir2021image}.
Despite strong accuracy, pure transformers often incur substantial computational and memory costs for long videos, particularly relative to compact CNN baselines~\cite{bertasius2021space,arnab2021vivit,liu2022videoswin}.

\paragraph{Lightweight video recognition.}
A parallel line of work targets real-time and on-device deployment by reducing compute and parameters.
\emph{CNN-based} designs dominate this space due to depthwise separable convolutions and efficient temporal operators.
MoViNets~\cite{kondratyuk2021movinets} introduce a NAS‑designed family of efficient 3D CNNs and a stream buffer for constant‑memory streaming, achieving strong accuracy–efficiency trade‑offs.
X3D~\cite{feichtenhofer2020x3d} systematically compounds network width, depth, and temporal resolution to yield compact 3D CNNs.
Temporal Shift Module (TSM)~\cite{lin2019tsm} augments 2D CNN backbones with a zero‑parameter, zero‑FLOP temporal exchange, enabling video recognition with image‑classification‑level compute.
Further lightweight temporal modules include TEA~\cite{li2020tea}, TDN~\cite{wang2021tdn}, and TAdaConv~\cite{huang2021tada},
which inject temporal cues with modest overhead.
\emph{Transformer-based} models such as MViT~\cite{fan2021multiscale} and VideoSwin-T~\cite{liu2022videoswin}
reduce attention cost via multiscale hierarchies or windowed attention, yet they often remain heavier than the most compact CNN baselines at strict mobile budgets.
\emph{Hybrid} architectures combine convolutions and attention to leverage local inductive bias with global modeling.
UniFormer~\cite{li2022uniformer} exemplifies this trend by interleaving convolutional blocks and self-attention.
Overall, lightweight video recognition in practice is led by CNNs and convolution–attention hybrids (see Table~\ref{tab:k600.flops} in the Experiments section below).

\paragraph{Temporal coherence: slowness and higher-order smoothness.}
An early prominent approach to temporal coherence, slow feature analysis (SFA)~\cite{wiskott2002slow}
prefers features that evolve as slowly as possible in time by minimizing the expected squared temporal derivative of each feature
subject to zero mean, unit variance, and decorrelation constraints.
The motivation is that latent factors in natural videos typically change gradually, so maximally slow features capture stable, semantically meaningful structure.

\paragraph{Temporal order and ranking constraints.}
A complementary line of self-supervised work focuses on verifying or predicting the chronological order of frames or clips.
Representative approaches include Shuffle \& Learn~\cite{misra2016shuffle}, Sorting Sequences (Order Prediction Networks)~\cite{lee2017unsupervised},
and Odd-One-Out networks~\cite{fernando2017self}.
These objectives encourage representations to respect temporal structure by reasoning about sequence order rather than enforcing slowness.

\vspace{0.5em}

\section{Method}
\label{sec:method}

\begin{figure}[tb]
  \centering
 \begin{subfigure}{0.3\linewidth}
    \includegraphics[width=\linewidth]{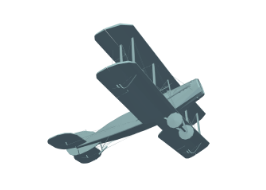}
    \caption{Yaw}
    \label{fig:sub-a}
  \end{subfigure}
  \hfill
  \begin{subfigure}{0.3\linewidth}
    \includegraphics[width=\linewidth]{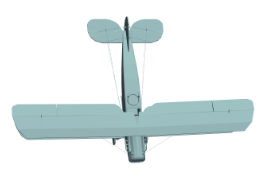}
    \caption{Pitch}
    \label{fig:sub-b}
  \end{subfigure}
  \hfill
  \begin{subfigure}{0.3\linewidth}
    \includegraphics[width=\linewidth]{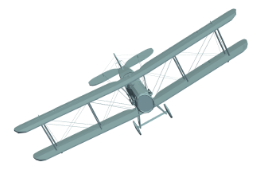}
    \caption{Roll}
    \label{fig:sub-c}
  \end{subfigure} \\
    \begin{subfigure}{0.49\linewidth}
    \centering
    \includegraphics[width=\linewidth]{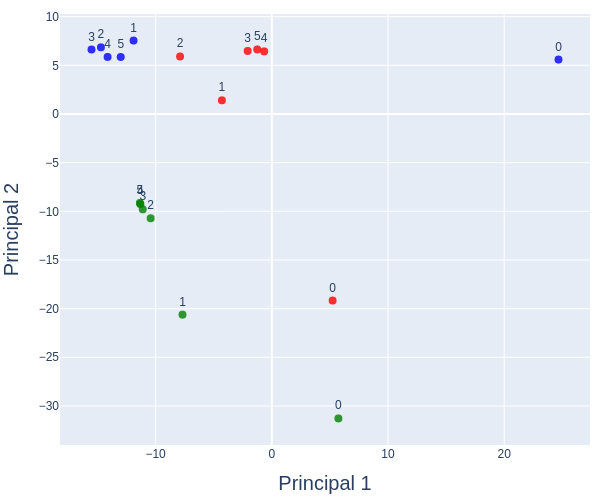}
    \caption{Embedding without smoothing}
    \label{fig:embd-a}
  \end{subfigure}
  \hfill
  \begin{subfigure}{0.49\linewidth}
    \centering
    \includegraphics[width=\linewidth]{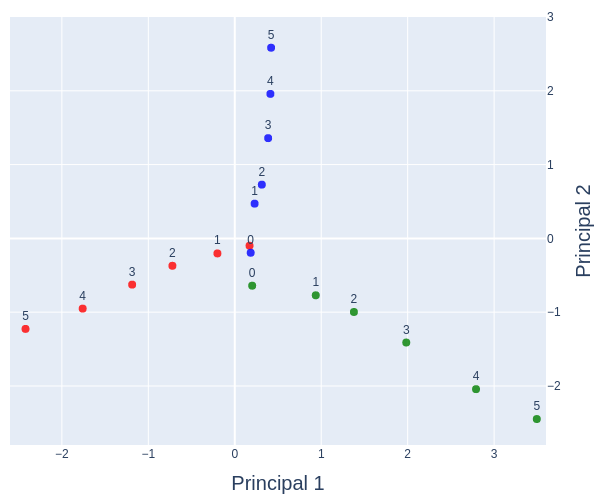}
    \caption{Embedding with smoothing}
    \label{fig:embd-b}
  \end{subfigure}
  \caption{\textbf{Warm-up Example.} \emph{Top}: The used Airplanes dataset
  containing 1,000 training and 100 test short videos of model airplanes
  performing one of three rotations, starting from a random position.
  The dataset isolates temporal classification, as any single frame is independent of the rotation label. \\
  \emph{Bottom}: Output embeddings of two identical models trained with and without the smoothness term.
  In green, blue and red are typical clips embeddings for Yaw, Pitch and Roll, respectively, projected to the first
  two principal components of the embedded test set.
  Each point is a single frame embedding.
  The index is the clip frame index.}
  \label{fig:planes.dataset}
\end{figure}

Consider a video frame sequence \(X = (\xv_t)_{t=0}^{M-1}\) and an encoding of a video recognition
model's intermediate layer \(\varphi(X) = Z = (\zv_t)_{t=0}^{N-1}\), \textit{where \(M\) and \(N\) denote the numbers of input frames and embedding time steps (after any temporal subsampling), respectively}.
The main objective of this work is to guide the optimization process to favor solutions \(\varphi\)
for which \(\zv(t)\) is a smooth function of \(t\).

\textbf{Warm-up Example.} Let us consider an instructive simplified example.
We constructed a small dataset containing 1,000 short videos of a few model airplanes performing one of three rotations:
Yaw, Pitch, or Roll, starting from a random initial position, as shown in Figure~\ref{fig:planes.dataset}(top).
%We encode the frames using an image recognition backbone
%and further process the embeddings using a lightweight transformer, as shown in Figure~\ref{fig:final}.

\begin{figure}[tb]
 \centering
  \begin{minipage}{0.8\linewidth}
   \centering
   \includegraphics[width=\linewidth]{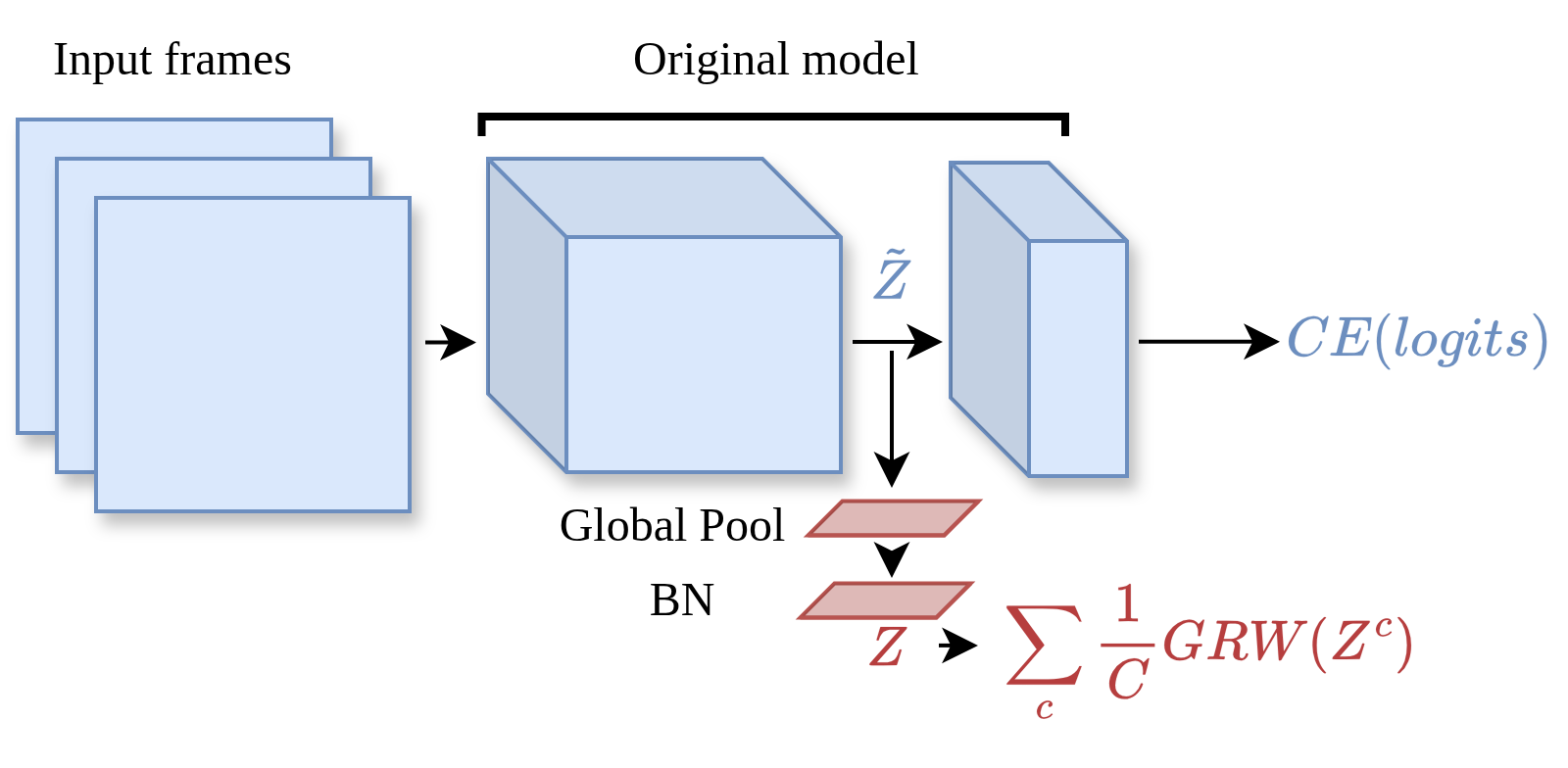}
   \caption{\textbf{Intermediate layer smoothing.} The encodings $\tilde{Z}$ are global-pooled along the spatial dimensions, then normalized across
   the batch dimension, where we use BN without learnable parameters. The sub-clips $Z^c$ are fed into \(\GRW\).}
   \label{fig:inter}
 \end{minipage}
 \begin{minipage}{0.8\linewidth}
   \centering
   \includegraphics[width=\linewidth]{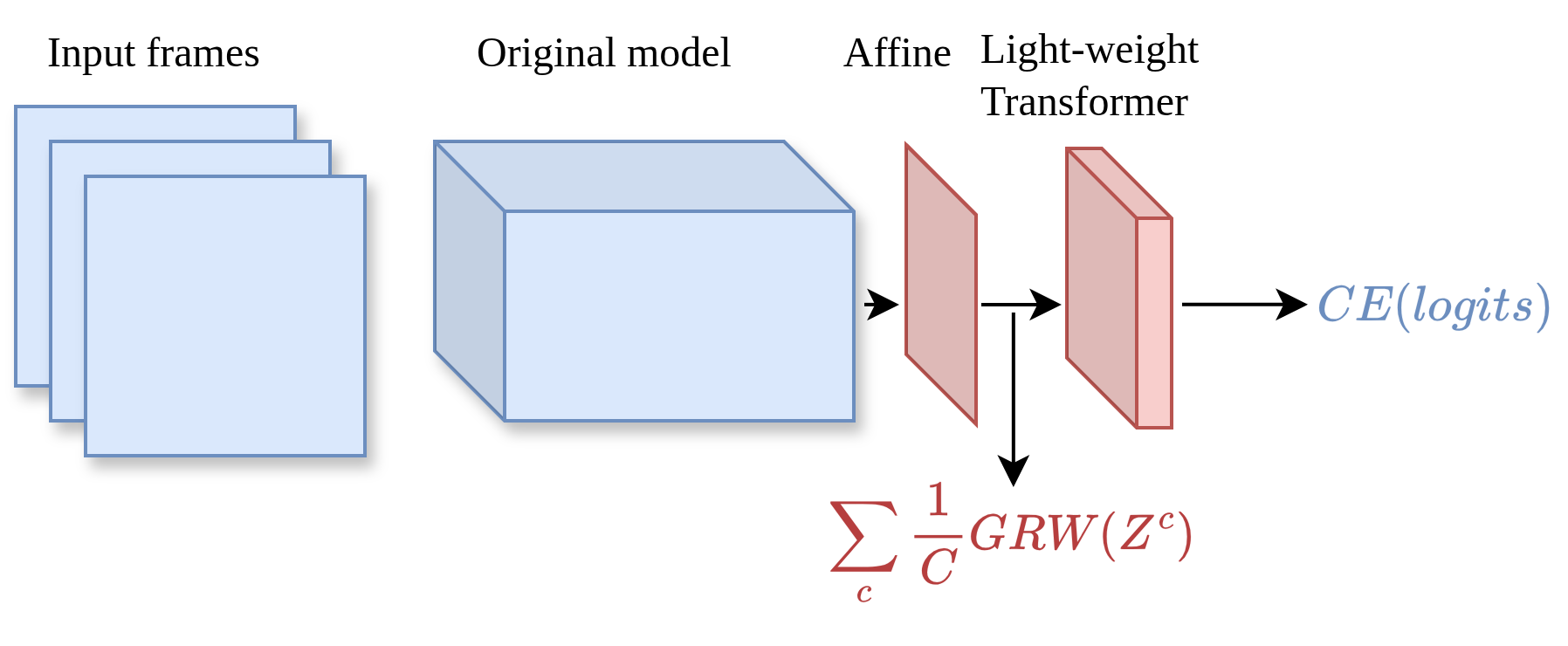}
   \caption{\textbf{Final layer smoothing.} Output encodings $\varphi(X)=\tilde{Z}$ of a given video model
    are affine transformed to $Z$. The sub-clips $Z^c=(\zv_{cT}, \dots, \zv_{(c+1)T-1})$
    are fed into \(\GRW\) regularization, as an additional loss term, then further processed using a few Attention layers.}
   \label{fig:final}
 \end{minipage}
\end{figure}\label{fig:model}

To analyze the geometry of the embeddings, we trained two identical models.
In both, we use a pretrained MobileNet as the recognition model that calculates embeddings per frame and then a single Transformer layer that process several consecutive frames for the temporal information.
The models are trained to predict the rotation label using cross-entropy loss.
In the second model, we smooth the MobileNet embeddings $Z$
with an additional loss term that penalizes high accelerations in $\zv(t)$,
directing the optimization towards keeping $\frac{d^2}{dt^2}\zv(t)$ low.
\textit{This term, which is the main contribution of the paper,} is formally described in Section~\ref{subsec:grw}.
The resulting embeddings without and with this smoothing term are presented in Figures~\ref{fig:embd-a} and \ref{fig:embd-b} respectively.

The geometry in Figure~\ref{fig:embd-a} shows that the standard encoder lacks a smooth structure.
As it does not utilize the smooth prior we have with respect to video sequences, a more complicated function was learned than simply the movement in yaw/pitch/roll.
In contrast, Figure \ref{fig:embd-b} shows that the model trained with the smoothness term finds an intrinsic linear two-dimensional representation
where each rotation is mapped to a certain direction.
Note that the curve in the plot shows that each rotation is smooth, and the accelerations $\frac{d^2}{dt^2}\zv(t)$ are low.

We next turn to explain how we calculate smoothing term that we suggest for a given normalized embedding in a neural network.

\subsection{The Gaussian Random Walk (GRW) Smoothing Term}\label{subsec:grw}
Consider a normalized layer output \(Z = (\zv_t)_{t=0}^{N-1}\) over time. Our goal is to induce a smoothness prior on the embeddings.
We do it within a $T$ time window, dividing $Z$ into short subsequences:
\begin{equation}\label{eq:x.clip}
    Z^c = (\zv^c_0,\dots,\zv^c_{T-1}) \coloneqq (\zv_{cT},\dots,\zv_{(c+1)T-1}),\quad c=0,\dots,C-1,\;\; C=\lfloor N/T\rfloor.
\end{equation}
Imposing a direct smooth prior on $Z$ poses a difficulty, as mapping all
$\zv(t)$ to a single point is ``maximally smooth'' but results in a degenerate solution that is clearly undesired. Therefore, we construct the smooth loss in two steps.
\begin{enumerate}
    \item \emph{Frame Ordering:} We first introduce a contrastive loss that directs the optimization towards mappings that maintain the structure of the order of the frames.
    \item \emph{Smooth Prior:} We then impose a distribution that favors low-acceleration mappings and plug it in the contrastive loss, resulting in our smoothing prior.
\end{enumerate}

\textbf{Frame Ordering.}
Consider the following \textit{right-frame-order} contrastive loss
\begin{equation}\label{eq:frame.order.loss}
    \mathcal{L}_f(\varphi) = -\EE\limits_{X,c}\left[\log \frac{f(\zv_0^c,\zv_1^c, \zv_2^c,...,\zv_{T-1}^c)}
    {\sum_{\pi \in S(1:T) } f(\zv^c_0,\zv^c_{\pi(1)},\zv^c_{\pi(2)},...,\zv^c_{\pi(T-1)})}\right],
\end{equation}
where $f$ is a probability distribution that we will define in the next step of the smooth prior,
and $S(1:T)$ is the group of all permutations $\pi$ of the elements $\{1,\dots,T-1\}$.
That is,
we fix the first frame and contrast the correct ordering of
the remaining frames with all their permutations.
This prevents the loss term from degenerate solutions that collapse to the same point.

\textbf{Smooth Prior.}
In the loss term in Equation~\eqref{eq:frame.order.loss}, $f$ can be chosen freely.
We impose the smoothness prior by setting $f$ as a distribution
favoring low acceleration embedding.

Define the velocities and accelerations of the embedding
\[
    \frac{d}{dt} Z^c = V^c = (\vv^c_t)_{t=0}^{T-2}  \coloneq (\zv^c_1-\zv^c_0,..., \zv^c_{T-1}-\zv^c_{T-2}),
\]
\[
    \frac{d}{dt} V^c = A^c = (\av^c_t)_{t=0}^{T-3}  \coloneq (\vv^c_1-\vv^c_0,..., \vv^c_{T-2}-\vv^c_{T-3}).
\]
To smooth $\zv(t)$ we model the distribution of the velocities as random walk with Gaussian increments,
\begin{align}
    \vv^c_{t}|\vv^c_{0} &= \vv^c_{0} + \sum_{i=0}^{t-1} \av^c_i,& t=1,\dots,T-2,
\end{align}
where $(\av^c_t)_{t=0}^{T-3}$ are i.i.d., $\av^c_t \sim \mathcal{N}(\vec{0},I)$.
Under this assumption
\begin{equation}
    f(Z^c) \coloneq p(\vv^c_1,\dots,\vv^c_{T-2}|\vv^c_0) =p(A^c)= \prod_{t=0}^{T-3} \mathcal{N}(\av^c_t),
\end{equation}
where, with abuse of notation, $\mathcal{N}$ denotes the density of the standard normal distribution.
To put it all together, the loss in Equation~\eqref{eq:frame.order.loss} becomes
\begin{equation}\label{eq:smooth.prior.before.scalling}
    \mathcal{L}(\varphi) = -\EE\limits_{X,c}\left[\log \frac{p(A^c)}
    {\sum_{\pi \in S(1:T) } p(A^c_\pi)}\right],
\end{equation}
where $A^c_\pi$ are the accelerations according to the permutation $\pi$,
$A^c_\pi \coloneq \frac{d^2}{dt^2} (\zv^c_0,\zv^c_{\pi(1)},\zv^c_{\pi(2)},\allowbreak...,\zv^c_{\pi(T-1)})$.
\\

The loss~\eqref{eq:smooth.prior.before.scalling} requires a scaling parameter.
For any embedding $\varphi$ the scaling of the embedding by $\alpha$ to $\alpha\varphi$
introduce an inverse temperature parameter
\[
    \mathcal{L}(\alpha\varphi) = -\EE\limits_{X,c}\left[\log \frac{p(\alpha A^c)}
    {\sum_{\pi \in S(1:T) } p(\alpha A^c_\pi)}\right].
\]
We determine the scaling by adding another term $\Omega(V^c)$ controlling the unconditional speeds
\[
    \vv^c_t \sim \mathcal{N}(\vec{0},I),\tab \Omega(V^c) = \log \prod_{t=0}^{T-2}\mathcal{N}(\vv^c_t).
\]
The final smooth prior is
\begin{equation}\label{eq:smooth.prior}
    \mathcal{L}_{smooth}(\varphi) = -\EE\limits_{X,c}\left[\log \frac{p(A^c)}
    {\sum_{\pi \in S(1:T) } p(A^c_\pi)} + \alpha\Omega(V^c)\right],
\end{equation}
and the final loss is
\begin{equation}\label{eq:final.loss}
    \mathcal{L}_{CE} + \lambda{L_{smooth}}.
\end{equation}

Note that the sum in Equation~\ref{eq:smooth.prior} is taken over all permutations,
which grow factorially with $T$.
For large $T$, we uniformly sample $k$ permutations, for some given $k$.
In practice, we enumerate all $(T-1)!$ orderings when $ T\le 7$ and uniformly sample
$k=1000$ permutations when $T>7$.
Since $(T-1)!<1000$ for $T \le 7$, the number of evaluated orderings per clip is $\le 1000$,
keeping the computational cost of the denominator in Equation~\eqref{eq:smooth.prior} effectively independent of
$T$.

\subsection{Applying GRW to a neural network}\label{subsec:applying-grw-to-a-neural-network}

We propose to apply the GRW loss to intermediate, typically higher,
layers of video recognition models to induce the smoothness inductive bias.
In Section~\ref{sec:exp} we demonstrate empirically the advantage of using this loss term.

We propose applying the GRW term in two possible locations in a neural network: (i) smoothing of an intermediate layer, or (ii) smoothing of the final layer. We describe each option next.

\textbf{Intermediate Layer Smoothing (Figure \ref{fig:inter}).} For an intermediate layer output $\tilde{Z} \in \RR^{C \times N \times K}$,
where $N$ is the temporal dimension, $C$ is the channel dimension, and $K$ is the (flattened) spatial dimension,
the encodings are globally pooled along the spatial dimensions and then normalized to have
expected value of $\vec{0}\in \RR^C$ and mean unit length,
\[Z = \frac{1}{\sqrt{C}}BN_{1d} (GP(\tilde{Z})) \in \RR^{C \times N},\]
where the $BN_{1d}$ does not have learnable shift and rescale parameters.
Then, we extract sub-clips $Z^c$ from $Z$ and use them as input to the GRW loss.

\textbf{Final Layer Smoothing (Figure \ref{fig:final}).} We smooth the final encodings of the model,
just before the classification head and then further process the smoothed embeddings with a lightweight temporal model, namely, a Transformer with 1-2 layers.
Formally, the final layer output $\tilde{Z}$ is normalized using a learnable affine transformation,
$Z = Linear(\tilde{Z})$, which is applied on each embedding separately.
In the supplementary material we show that optimizing this transformation together with the GRW loss,
put the embeddings in $Z$ close to the unit sphere and therefore we refer to this
affine transformation as normalization.
Also here, we create from $Z$ the sub-clips $Z^c$ and use them in the GRW loss.

\section{Experiments}
\label{sec:exp}
To demonstrate the effect of smoothing using our method,
we compare the performance of models trained with and without $\GRW$ regularization.

\textbf{Datasets.}
We report our results on Kinetics-600 (K600)~\cite{carreira2018short} and
Kinetics-400 (K400)~\cite{kay2017kinetics}.
Both datasets consist of 10-second videos of varying resolutions and frame rates, labeled with 600 and 400 action classes.

\textbf{Models and Implementation.}
We focus on video recognition models with lower computational requirements to control training time,
and since introducing inductive bias becomes increasingly important when efficiency is a factor.
We selected the current state-of-the-art models in the lowest categories of FLOPS and memory,
and fine-tuned them by applying $\GRW$-smoothing.
Specifically, we applied $\GRW$ to the MoViNet model family A0,\dots,A3,
and their streaming versions A0-S,\ldots,A2-S. The streaming variants,
denoted as Ai-S, are memory-efficient versions of the MoViNet-Ai models that process videos frame by frame.
The baseline performance of these streaming models is lower than
that of their non-streaming counterparts due to the use of causal operations.

At the lowest end of memory requirements, we applied $\GRW$ to MobileNetV3 Small
with the following modification.
We extract a ``base frame'' every $T$ frames,
where $T$ is the $\GRW$ clip window (see Section~\ref{sec:method}).
All frames within a $T$-frame window are processed individually,
but alongside their corresponding base frame.
To support this, we modified the first layer to accept 6 input channels.

We used Final Layer $\GRW$-smoothing (see Figure~\ref{fig:final}),
which produced better results than Intermediate Layer $\GRW$-smoothing (Figure~\ref{fig:inter}).
However, both methods improve accuracy results, as discussed in Section~\ref{subsec:ablation}.
We maintain the original model's final layer output dimension in the normalization affine transformation
and apply $\GRW$-smoothing on the output.
We replace the classification head with a 2-layer vanilla transformer with a standard $\times 4$ MLP expansion factor
(see Section~\ref{sec:method} for details).

We set $\lambda = 10^{-1}$ as the balancing factor in Equation~\eqref{eq:final.loss} and $\alpha = \frac{1}{2}$
as the scaling factor in Equation~\eqref{eq:smooth.prior}.
We found that the results are robust to perturbing the values of $\lambda$,
which suggests that gradients in the direction of smooth solutions align
with gradients with respect to the classification likelihood; see ablation studies in Subsection~\ref{subsec:ablation}.

In all experiments we set the GRW window to span $0.5$–$1.0$\,s of video.
Specifically, for MoViNet-A0/‑A1/‑A2‑GRW and MobileNetV3‑S‑GRW we use $5$\,fps with $T{=}5$ (covering $1$\,s),
and for MoViNet‑A3‑GRW we use $12$\,fps with $T{=}6$ (covering $0.5$\,s).
For these values of $T$ we enumerate the full set of orderings in Equation~\eqref{eq:smooth.prior}, of size $(T{-}1)!$
(i.e., $24$ for $T{=}5$ and $120$ for $T{=}6$), so no permutation subsampling was required (i.e., $k$ was not used).

\begin{remark}
    GRW-smoothing operates on per-frame embeddings $Z=(\zv_t)$ rather than raw frames, so the added computation is small relative to the backbone.
    For example, in MoViNet-A2-S-GRW the input frame $\xv_t \in \mathbb{R}^{224\times224\times3}$ (150{,}528 dims) is mapped to
    $z_t \in \mathbb{R}^{640}$, i.e., an \(\approx\!235\times\) reduction in dimension.
    Consequently, the $O(k\!\cdot\!T)$
    vector operations and log-probability evaluations in Equation~\eqref{eq:smooth.prior} are negligible compared to the early convolutional blocks.
    Empirically, with the $T$ values used in this section (full enumeration of permutations),
    the wall-clock time per training epoch for MoViNet-A2-S-GRW is only 2\% higher than MoViNet-A2-S without smoothing;
    even for larger $T$ where one might sample up to $k{=}1000$ permutations, the overall cost remains dominated by the backbone forward/backward pass.
\end{remark}

\textbf{Training.} On K600, we fine-tune starting from existing weights.
On K400, we use transfer learning from K600.
We employ a simple training process, not applying augmentations except
when training the A2 and A3 models on K400.
We use different training rates for the transformer head and model backbone,
decreasing with a cosine learning rate scheduler in the range $[10^{-4}, 10^{-6}]$
for the model backbone and $[10^{-3}, 10^{-5}]$ for the transformer head.
We fine-tune for 14 epochs on K600 and 10 epochs on K400.

The smaller models, A0--A1 and MobileNet, were trained on a single \texttt{dgx-A100} for 3--5 days,
while the A2 and A3 models were trained on $2 \times \texttt{dgx-A100}$ for 5 days.

\subsection{Results}\label{subsec:results}
For all results, we use a single clip evaluation for our models.
We report Top-1 accuracy results against FLOPS and memory.
The resolution column refers to the resolution of the input video, with 224 indicating $224 \times 224$.
The frames column is given by $num\_clips \times num\_frames$ used in the evaluation.
The GFLOPs column indicates the \textit{total} computation for the evaluation of a single video sample.

As seen in Table~\ref{tab:k600.flops}, all models with $\GRW$-smoothing achieve significant improvements in accuracy and set
new SOTA results in their corresponding GFLOPs group.
Specifically, MoViNet-A0-S-GRW, MoViNet-A1-S-GRW, MoViNet-A2-S-GRW and MoViNet-A3-GRW improve the SOTA results
by $6.1\%$, $5.2\%$, $4.7\%$ and $3.8\%$, respectively.
For MoViNet-A3-GRW, the next model achieving similar accuracy, MViTv2-B-32×3,
requires 18.3$\times$ more GFLOPs.

We compare current SOTA memory-efficient models, namely MobileNet and streaming versions
of the MoViNet model family, before and after smoothing them with $\GRW$; see Table~\ref{tab:mem}.
MobileNetV3-S-GRW, MoViNet-A0-S-GRW, MoViNet-A1-S-GRW, MoViNet-A2-S-GRW improve their non-smooth
versions by $6.0\%$, $6.4\%$, $5.5\%$ and $4.9\%$, respectively; see also Figure~\ref{fig:teaser}(right).

\begin{table}[tb]
\centering
\caption{K600 by FLOPS}
\label{tab:k600.flops}
\begin{tabular}{lccccc}
\hline
 & \textbf{Model} & \textbf{Top-1} & \textbf{GFLOPs} & \textbf{RES} & \textbf{FRAMES}\\
\hline
1 & \textbf{MoViNet-A0-S-GRW} & \textbf{78.4} & 2.7 & 172 & 1 $\times$ 50\\
2 & MoViNet-A0\cite{kondratyuk2021movinets} & 72.3 & 2.7 & 172 & 1 $\times$ 50\\
3 & MobileNetV3-S\cite{howard2019searching} & 61.3 & 2.8 & 224 & 1 $\times$ 50\\
4 & MobileNetV3-S+TSM\cite{lin2019tsm} & 65.5 & 2.8 & 224 & 1 $\times$ 50\\
5 & X3D-XS\cite{feichtenhofer2020x3d} & 70.2 & 3.9 & 182 & 1 $\times$ 20\\
\hline
6 & \textbf{MoViNet-A1-S-GRW} & \textbf{81.9} & 6.0 & 172 & 1 $\times$ 50\\
7 & MoViNet-A1\cite{kondratyuk2021movinets} & 76.7 & 6.0 & 172 & 1 $\times$ 50\\
8 & X3D-S\cite{feichtenhofer2020x3d} & 73.4 & 7.8 & 182 & 1 $\times$ 40\\
\hline
9 & MoViNet-A2\cite{kondratyuk2021movinets} & 78.6 & 10.3 & 224 & 1 $\times$ 50\\
10 & MobileNetV3-L\cite{howard2019searching} & 68.1 & 11.0 & 224 & 1 $\times$ 50\\
11 & MobileNetV3-L+TSM\cite{lin2019tsm} & 71.4 & 11.0 & 224 & 1 $\times$ 50\\
12 & \textbf{MoViNet-A2-S-GRW} & \textbf{83.3} & 11.3 & 224 & 1 $\times$ 50\\
13 & X3D-M\cite{feichtenhofer2020x3d} & 76.9 & 19.4 & 256 & 1 $\times$ 50\\
14 & X3D-XS\cite{feichtenhofer2020x3d} & 72.3 & 23.3 & 182 & 30 $\times$ 4\\
\hline
15 & \textbf{MoViNet-A3-GRW} & \textbf{85.6} & 56.4 & 256 & 1 $\times$ 120\\
16 & MoViNet-A3\cite{kondratyuk2021movinets} & 81.8 & 56.9 & 256 & 1 $\times$ 120\\
17 & X3D-S\cite{feichtenhofer2020x3d} & 76.4 & 76.1 & 182 & 30 $\times$ 13\\
18 & X3D-L\cite{feichtenhofer2020x3d} & 79.1 & 77.5 & 356 & 1 $\times$ 50\\
19 & MoViNet-A4\cite{kondratyuk2021movinets} & 83.5 & 105 & 290 & 1 $\times$ 80\\
\hline
20 & UniFormer-S\cite{li2022uniformer} & 82.8 & 167 & 224 & 4 $\times$ 16\\
21 & X3D-M\cite{feichtenhofer2020x3d} & 78.8 & 186 & 256 & 30 $\times$ 16\\
22 & X3D-L\cite{feichtenhofer2020x3d} & 80.7 & 187 & 356 & 1 $\times$ 120\\
23 & I3D\cite{carreira2018short} & 71.6 & 216 & 224 & 1 $\times$ 250\\
24 & MoViNet-A5\cite{kondratyuk2021movinets} & 84.3 & 281 & 320 & 1 $\times$ 120\\
25 & MViT-B-16×4\cite{fan2021multiscale} & 82.1 & 353 & 224 & 5 $\times$ 16\\
26 & MoViNet-A6\cite{kondratyuk2021movinets} & 84.8 & 386 & 320 & 1 $\times$ 120\\
27 & UniFormer-B\cite{li2022uniformer} & 84.0 & 389 & 224 & 4 $\times$ 16\\
28 & XViT (8×)\cite{bulat2021space} & 82.5 & 425 & 224 & 3 $\times$ 8\\
29 & XViT (16×)\cite{bulat2021space} & 84.5 & 850 & 224 & 3 $\times$ 16\\
30 & MViT-B-32×3\cite{fan2021multiscale} & 83.4 & 850 & 224 & 5 $\times$ 32\\
31 & MViTv2-B-32×3\cite{li2022mvitv2} & 85.5 & 1030 & 224 & 5 $\times$ 32\\
\hline
\end{tabular}
\caption*{\textbf{Table \ref{tab:k600.flops}:} \textit{Top-1 accuracy, total video evaluation cost (in GFLOPs), input resolution (RES),
    and FRAMES = clips × frames per clip used for evaluation on Kinetics-600.
    Models enhanced with our proposed smooth regularization are marked with $\GRW$.
    These models consistently outperform their baselines and other state-of-the-art methods under similar FLOP constraints.
    Variance: for MoViNet-A0-S-GRW, across three seeds we obtain $78.4 \pm 0.05$ Top-1 (mean $\pm$ std).}}
\end{table}

\begin{table}[tb]
  \centering
  \begin{minipage}{0.485\linewidth}
    \centering
    \caption{K600 by Mem}\label{tab:mem}
        \begin{tabular}{lcc}
            \hline
            \textbf{Model} & \textbf{Top-1} & \textbf{Mem MB} \\
            \hline
            MobileNetV3-S\cite{howard2019searching} & 61.3 & 29 \\
            \textbf{MobileNetV3-S-GRW} & \textbf{67.3} & 30 \\
            \hline
            \textbf{MoViNet-A0-S-GRW} & \textbf{78.4} & 53 \\
            MoViNet-A0-S\cite{kondratyuk2021movinets} & 72.0 & 53 \\
            \hline
            \textbf{MoViNet-A1-S-GRW} & \textbf{81.9} & 67 \\
            MoViNet-A1-S\cite{kondratyuk2021movinets} & 76.4 & 67 \\
            \hline
            \textbf{MoViNet-A2-S-GRW} & \textbf{83.3} & 78 \\
            MoViNet-A2-S\cite{kondratyuk2021movinets} & 78.4 & 78 \\
            \hline
        \end{tabular}
        \caption*{\textbf{Table \ref{tab:mem}:} \textit{Top-1 accuracy and memory usage (in MB)
            for memory-efficient models on Kinetics-600.
            Models enhanced with our smooth regularization (GRW) are shown in bold and
            consistently outperform their baselines under identical memory constraints.}}
    \end{minipage}\hfill
  \begin{minipage}{0.485\linewidth}
    \centering
    \caption{K400 by FLOPS}\label{tab:k400}
        \begin{tabular}{lcc}
            \hline
            \textbf{Model} & \textbf{Top-1} & \textbf{GFLOPs} \\
            \hline
            \textbf{MoViNet-A0-S-GRW} & \textbf{70.4} & 2.7 \\
            MoViNet-A0\cite{kondratyuk2021movinets} & 65.8 & 2.7 \\
            \hline
            MoViNet-A2\cite{kondratyuk2021movinets} & 75.0 & 10.3 \\
            \textbf{MoViNet-A2-GRW} & \textbf{77.6} & 11.3 \\
            X3D-XS\cite{feichtenhofer2020x3d} & 69.5 & 23.3 \\
            \hline
            \textbf{MoViNet-A3-GRW} & \textbf{81.7} & 56.4 \\
            MoViNet-A3\cite{kondratyuk2021movinets} & 78.2 & 56.9 \\
            X3D-S\cite{feichtenhofer2020x3d} & 73.5 & 76.1 \\
            VideoMamba\cite{Li2024VideoMamba} & 76.9 & 108 \\
            \hline
        \end{tabular}
        \caption*{\textbf{Table \ref{tab:k400}:} \textit{Top-1 accuracy, total video evaluation cost,
        on Kinetics-400.}}
    \end{minipage}
\end{table}

\subsection{Ablation Studies}
\label{subsec:ablation}

We study the effect of $\GRW$ by (i) disentangling the contribution of smoothing versus the added attention layers,
and (ii) analyzing sensitivity to the key hyperparameters: the GRW window $T$,
the scaling factor $\alpha$ in Equation~\eqref{eq:smooth.prior},
and the balance $\lambda$ in Equation~\eqref{eq:final.loss}.

\paragraph{Placement and attention vs.\ smoothing.}
We ablate where $\GRW$ is applied (Intermediate vs.\ Final layer) and whether the attention head alone explains the gains.
Concretely, on K600 we train:
(i) MoViNet-A2-S-GRW (Final Layer Smoothing; Figure~\ref{fig:final}),
(ii) MoViNet-A2-S-GRW (Intermediate Layer Smoothing; Figure~\ref{fig:inter}), and
(iii) MoViNet-A2-S + attention, the baseline equipped with the same 2-layer Transformer head but \emph{without} $\GRW$.
Table~\ref{tab:ablation} shows that adding attention alone yields a small gain over the baseline (+0.9 Top-1),
while training with $\GRW$ yields an additional +4.0 absolute (for a total +4.9 over the baseline) when applied at the final layer;
applying $\GRW$ at an intermediate layer also improves accuracy (+2.4) even without attention.

\begin{table}[tb]
\centering
\caption{Ablation on placement (K600, MoViNet-A2-S family).}
\label{tab:ablation}
\begin{tabular}{lcc}
\hline
\textbf{Model} & \textbf{Top-1} & \textbf{GFLOPs} \\
\hline
\textbf{MoViNet-A2-S-GRW (final layer)} & \textbf{83.3} & 11.3 \\
MoViNet-A2-S-GRW (intermediate layer) & 80.8 & 10.3 \\
MoViNet-A2-S + attention (no $\GRW$) & 79.3 & 11.3 \\
MoViNet-A2-S\cite{kondratyuk2021movinets} & 78.4 & 10.3 \\
\hline
\end{tabular}
\end{table}

\paragraph{Sensitivity to $T$, $\alpha$, and $\lambda$.}
We further ablate the GRW hyperparameters on MoViNet-A0-S-GRW trained on K600,
varying one parameter at a time and keeping all other settings fixed (Final Layer Smoothing;
training protocol as in Sec.~\ref{sec:exp}). Results are summarized in Table~\ref{tab:hp_ablations}.

\begin{table}[tb]
\centering
\begingroup
\setlength{\tabcolsep}{5pt}
\begin{minipage}{0.32\textwidth}
\centering
\footnotesize (a) Window $T$\\[2pt]
\scriptsize
\begin{tabular}{cc}
\hline
\textbf{Top-1 (\%)} & \textbf{$T$} \\
\hline
77.3 & 3 \\
\textbf{78.4} & \textbf{5} \\
78.0 & 10 \\
72.0 & no smoothing \\
\hline
\end{tabular}
\end{minipage}
\hfill
\begin{minipage}{0.32\textwidth}
\centering
\footnotesize (b) $\alpha$ in Equation~\eqref{eq:smooth.prior}\\[2pt]
\scriptsize
\begin{tabular}{cc}
\hline
\textbf{Top-1 (\%)} & \textbf{$\alpha$} \\
\hline
77.9 & 0.25 \\
\textbf{78.4} & \textbf{0.5} \\
78.3 & 1.0 \\
72.0 & no smoothing \\
\hline
\end{tabular}
\end{minipage}
\hfill
\begin{minipage}{0.32\textwidth}
\centering
\footnotesize (c) $\lambda$ in Equation~\eqref{eq:final.loss}\\[2pt]
\scriptsize
\begin{tabular}{cc}
\hline
\textbf{Top-1 (\%)} & \textbf{$\lambda$} \\
\hline
78.0 & 0.01 \\
\textbf{78.4} & \textbf{0.1} \\
75.5 & 1.0 \\
72.0 & no smoothing \\
\hline
\end{tabular}
\end{minipage}
\endgroup
\caption{Hyperparameter ablations for $\GRW$ on K600 with MoViNet-A0-S-GRW. (a) Window $T$ peaks at $T{=}5$. (b) Scaling $\alpha$ shows a mild optimum near $\alpha{=}0.5$ in Equation~\eqref{eq:smooth.prior}. (c) Balance $\lambda$ is robust in $\{0.01,0.1\}$ and degrades at $\lambda{=}1.0$ in Equation~\eqref{eq:final.loss}.}
\label{tab:hp_ablations}
\end{table}

\paragraph{Summary.}
Across these ablations, $\GRW$ is not overly sensitive near the settings used in our main results: $T{=}5$, $\alpha{=}\tfrac{1}{2}$ in Equation~\eqref{eq:smooth.prior}, and $\lambda{=}10^{-1}$ in Equation~\eqref{eq:final.loss}.
Short windows under-exploit temporal coherence, very long windows may blur distinct motions, and excessively large $\lambda$ can over-regularize.

\section{Conclusion}
\label{sec:conc}

In summary, we introduced a novel smooth regularization technique designed to
enhance temporal understanding in video recognition models, particularly lightweight architectures.
By modeling the evolution of frame embeddings as a Gaussian Random Walk,
our method penalizes abrupt representational changes, effectively promoting low-acceleration solutions
that align with the natural temporal coherence of video data.
This approach has demonstrated significant accuracy improvements,
notably a $3.8\%$–$6.4\%$ gain on Kinetics-600,
and has established new state-of-the-art performance in compute-constrained settings.
By combining our proposed $\GRW$ regularization with models such as MoViNet-A0/1/2/3 and
their streaming counterparts, as well as MobileNetV3,
we improve overall performance within their respective FLOP and memory constraints.
The GRW regularization acts as a flexible, plug-and-play component with minimal
computational overhead, guiding networks towards more stable and generalizable feature representations.

While our approach presents promising results, it has certain limitations.
The core assumption of Gaussian Random Walk dynamics, while beneficial for many natural videos,
might not be universally optimal for content characterized by extremely abrupt transitions or
intentionally discontinuous motion.
A future work may explore such videos and how to extend GRW for such cases.
Furthermore, while our experiments demonstrate significant gains on lightweight models,
the extent of improvement on very large-capacity models,
which we could not do due to computational constraints,
require further investigation.
Finally, the necessity of
the frame ordering component in the contrastive loss, while effective
in preventing degenerate solutions, does introduce an additional layer of complexity
to the training objective.
A more efficient variants can be studied in a future work.

Looking ahead, several avenues for future research emerge.
Extending the application of our GRW smoothing to a wider array of video architectures,
including more complex Transformer-based models, could yield further insights into its generalizability.
Investigating its efficacy across diverse video understanding tasks beyond action recognition,
such as temporal action localization or video anomaly detection, presents another promising direction.
Additionally, a dynamic smoothing window that adapts to video content is favorable.
Finally, a more in-depth theoretical understanding of how GRW regularization influences
the optimization landscape and feature learning process would be beneficial.
\section{Acknowledgements}
\label{sec:ack}

Parts of this research were conducted using ORCHARD, a high-performance cloud computing cluster.
The authors would like to acknowledge Carnegie Mellon University for making this resource available to its community.

This material is based upon work supported by the United States Navy
under award number N00174-23-1-0001 and by the National Science Foundation under grant number CNS-2106862.
The content of the information does not necessarily reflect the position or the policy of the
government and no official endorsement should be inferred.
This work was done in the CMU Living Edge Lab,
which is supported by Intel, Arm, Vodafone,
Deutsche Telekom, CableLab, Crown Castle, InterDigital, Seagate, Microsoft,
the VMware University Research Fund, IAI, and the Conklin Kistler family fund.
Any opinions, findings, conclusions or recommendations expressed in
this document are those of the authors and do not necessarily
reflect the view(s) of their employers or the above funding sources.

This work was partially supported by a
grant from The Center for AI and Data Science at Tel Aviv University (TAD).

\bibliographystyle{plain}
\bibliography{refs}
\appendix
\section{Scaling}\label{sec:scalling}
Here we show the scaling behavior of $\GRW$-smoothing.
The main result we prove is that the optimal solution for a smoothing window of size $T$,
applied to approximately centered data, lies within a ball of radius bounded by $\mathcal{O}(T\sqrt{\ln T})$.
We prove this result in the one-dimensional case.

Let us recall the setup:

Consider \(T \geq 3\) points in \([0, R]\) with fixed endpoints
\begin{equation}\label{eq:Z}
    0 \eqqcolon z_1 \le z_2 \le \cdots \le z_T \coloneqq R,\tab Z=(z_t)^{T}_{t=1},
\end{equation}
and define velocity and acceleration vectors as
\[
V(Z) = (z_2 - z_1, \ldots, z_T - z_{T-1}) \in \mathbb{R}^{T-1},\tab v=(v_t)^{T-1}_{t=1},
\]
\[
A(Z) = (v_2 - v_1, \ldots, v_{T-1} - v_{T-2}) \in \mathbb{R}^{T-2},\tab A=(a_t)^{T-2}_{t=1}.
\]

We will denote by $\mathcal{Z}^T \subset \RR^T$ the set of all point configurations
of the form~\eqref{eq:Z}, that is, non-decreasing sequences with $z_1=0$.
For any $Z\in \mathcal{Z}^T$ we denote $R(Z) \coloneqq z_T$.

For any such configuration \(Z\), define the loss components:
\[
\mathcal{L}_v(Z) = \frac{1}{2} \sum_{t=1}^{T-1} v_t^2 = \frac{1}{2} \sum_{t=1}^{T-1} (z_{t+1} - z_t)^2,
\]
and
\[
\mathcal{L}_a(Z) = -\log \frac{\exp\left(-\frac{1}{2}\sum_{t=1}^{T-2} a_t^2(Z)\right)}
                      {\sum_{\pi} \exp\left(-\frac{1}{2} \sum_{t=1}^{T-2} a_t^2(Z^\pi)\right)},
\]
where the sum is over all permutations \(\pi\) of \(\{2,\ldots,T\}\) fixing \(z_1\), and
\[
Z^\pi = (z_1, z_{\pi(2)}, \ldots, z_{\pi(T)}).
\]

With the above notation and $\alpha=1$ (see Equation~\eqref{eq:smooth.prior} in the paper), $\GRW$ loss is given by
\[
\mathcal{L}(Z) = \mathcal{L}_a(Z) + \mathcal{L}_v(Z).
\]

\begin{theorem}[$GRW$-smoothing scale]\label{thm:scale}
    Given $T\ge 3$, let $Z^* = \arg\min_{Z\in \mathcal{Z}^T}  \mathcal{L}(Z)$.
    Then
    \[R(Z^*) = \mathcal{O}(T \sqrt{\ln T}).\]
\end{theorem}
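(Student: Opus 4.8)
The plan is to sandwich $\mathcal{L}(Z^*)$ between a lower bound that grows with $R(Z^*)$ and an explicit upper bound obtained by evaluating $\mathcal{L}$ at a degenerate configuration; the two bounds together pin down the scale of $R(Z^*)$.

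First I would record two elementary facts. (i) \emph{The acceleration term is nonnegative:} $\mathcal{L}_a(Z)\ge 0$ for every $Z$, since the numerator $\exp(-\tfrac12\sum_t a_t^2(Z))$ is exactly the $\pi=\mathrm{id}$ term of the denominator, so the argument of the logarithm lies in $(0,1]$. (ii) \emph{The velocity term controls the span:} because $\sum_{t=1}^{T-1}v_t = z_T - z_1 = R(Z)$, Cauchy–Schwarz gives $\sum_{t=1}^{T-1}v_t^2 \ge \tfrac{1}{T-1}R(Z)^2$, hence $\mathcal{L}_v(Z)\ge \tfrac{R(Z)^2}{2(T-1)}$. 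Combining (i) and (ii), $\mathcal{L}(Z)\ge \tfrac{R(Z)^2}{2(T-1)}$ for all $Z\in\mathcal{Z}^T$.

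For the matching upper bound I would simply test the collapsed configuration $Z_0=(0,\dots,0)\in\mathcal{Z}^T$. There $V(Z_0)=0$, so $\mathcal{L}_v(Z_0)=0$, and every permuted copy $Z_0^\pi$ is again the zero vector, so all the exponents vanish and $\mathcal{L}_a(Z_0) = -\log\bigl(1/(T-1)!\bigr) = \log\bigl((T-1)!\bigr)$, i.e. $\mathcal{L}(Z_0)=\log((T-1)!)$. Since $Z^*$ is a minimizer, $\tfrac{R(Z^*)^2}{2(T-1)} \le \mathcal{L}(Z^*) \le \mathcal{L}(Z_0) = \log((T-1)!)$. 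Using the crude estimate $\log((T-1)!) = \sum_{j=1}^{T-1}\ln j \le (T-1)\ln(T-1)$ then yields $R(Z^*)^2 \le 2(T-1)^2\ln(T-1) \le 2T^2\ln T$, so $R(Z^*)\le \sqrt{2}\,T\sqrt{\ln T} = \mathcal{O}(T\sqrt{\ln T})$.

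The argument uses essentially nothing about the softmax-over-permutations structure of $\mathcal{L}_a$ beyond nonnegativity and its value at the collapsed point. The only minor loose ends to tidy are: that a minimizer exists at all, which follows from continuity of $\mathcal{L}$ and coercivity (the bound $\mathcal{L}(Z)\ge \tfrac{R(Z)^2}{2(T-1)}$ forces $\mathcal{L}\to\infty$ as $R(Z)\to\infty$, so the infimum is attained on a compact sublevel set); and that $Z_0$ is indeed admissible in $\mathcal{Z}^T$. What might look like ``the hard part'' — analyzing the permutation partition function in $\mathcal{L}_a$ — is sidestepped entirely: the collapsed configuration already certifies $\mathcal{L}(Z^*)=\mathcal{O}(T\ln T)$, which is precisely the slack the Cauchy–Schwarz lower bound on $\mathcal{L}_v$ converts into the desired radius bound.
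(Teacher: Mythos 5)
Your proof is correct and follows the same overall sandwich strategy as the paper: establish a universal lower bound $\mathcal{L}(Z)\ge \tfrac{R(Z)^2}{2(T-1)}$ (you via Cauchy--Schwarz, the paper via the equivalent quadratic-program argument), then bound $\mathcal{L}(Z^*)$ from above by evaluating $\mathcal{L}$ at a single test configuration, and combine. The one genuine difference is the choice of test configuration for the upper bound. The paper evaluates $\mathcal{L}$ at the \emph{uniform} configuration $Z_u=(0,\tfrac{R}{T-1},\dots,R)$ with $R=T-1$, obtaining $\mathcal{L}_v(Z_u)=\tfrac{T-1}{2}$ and $\mathcal{L}_a(Z_u)\le\ln((T-1)!)$, for a total of $\mathcal{O}(T\ln T)$. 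You instead use the \emph{collapsed} configuration $Z_0=(0,\dots,0)$, which is admissible in $\mathcal{Z}^T$ and makes both the velocity term and every permuted acceleration vanish identically, yielding $\mathcal{L}(Z_0)=\ln((T-1)!)$ exactly. Your choice is cleaner: it removes the velocity contribution entirely, avoids any inequality in $\mathcal{L}_a$, and makes the $\mathcal{O}(T\ln T)$ bound transparent. The paper's uniform configuration is more informative in that it exhibits a nondegenerate candidate attaining the right order of loss with $R\sim T$, which foreshadows what the actual minimizer looks like — but for the purposes of the theorem, the collapsed point suffices, and your observation that the hard permutation analysis is entirely sidestepped is well taken. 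Your extra remark on existence of the minimizer via continuity and coercivity is a legitimate tightening that the paper glosses over.
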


\begin{remark}
Theorem~\ref{thm:scale} suggests that for approximately centered embedding,
it is beneficial to allow a scaling degree of freedom before feeding embedding vectors into $\GRW$-smoothing to allow the solution to converge into the $\mathcal{O}(T \sqrt{\ln T})$ ball.
Therefore, when applying $\GRW$-smoothing to a sequence model embedding,
we either directly scale and center the embedding using a non-learnable batch normalization (BN),
or allow for a learnable linear transformation to set the scaling.
Empirically, we found that centering is not required when a linear transformation is applied.
\end{remark}

\begin{proof}
    We will use the following propositions in the proof, providing their proofs subsequently:
    \begin{proposition}[Uniform Lower Bound on $\mathcal{L}$]\label{prop:uniform.lower}
        For any $Z \in \mathcal{Z}^T$,
        \begin{equation}\label{eq:uniform.lower.bound}
            \mathcal{L}(Z) \ge \mathcal{L}_v(Z) \ge \frac{R^2(Z)}{2(T-1)}.
        \end{equation}
    \end{proposition}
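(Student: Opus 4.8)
The plan is to prove the displayed chain of inequalities in two independent pieces. Since $\mathcal{L}=\mathcal{L}_a+\mathcal{L}_v$, the left inequality $\mathcal{L}(Z)\ge\mathcal{L}_v(Z)$ is equivalent to nonnegativity of the acceleration term $\mathcal{L}_a(Z)$; the right inequality $\mathcal{L}_v(Z)\ge R^2(Z)/(2(T-1))$ is a single Cauchy--Schwarz estimate on the velocity increments.

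For the first piece I would observe that $\mathcal{L}_a(Z)$ is, by construction, $-\log$ of a softmax weight. Write $N\coloneqq\exp(-\tfrac12\sum_{t}a_t^2(Z))$ for its numerator and $D\coloneqq\sum_\pi\exp(-\tfrac12\sum_t a_t^2(Z^\pi))$ for its denominator, the sum running over all permutations $\pi$ of $\{2,\dots,T\}$ fixing $z_1$. The identity permutation lies in this index set and, because $Z^{\mathrm{id}}=Z$, contributes exactly the value $N$ to $D$. Since every summand of $D$ is strictly positive, $D\ge N>0$, hence $N/D\in(0,1]$ and $\mathcal{L}_a(Z)=-\log(N/D)\ge 0$. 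Adding $\mathcal{L}_v(Z)$ to both sides yields $\mathcal{L}(Z)\ge\mathcal{L}_v(Z)$.

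For the second piece, set $v_t=z_{t+1}-z_t$. The boundary conditions $z_1=0$ and $z_T=R(Z)$ built into $\mathcal{Z}^T$ make the sum telescope, $\sum_{t=1}^{T-1}v_t=z_T-z_1=R(Z)$. Cauchy--Schwarz (equivalently, quadratic mean dominates arithmetic mean) then gives $(T-1)\sum_{t=1}^{T-1}v_t^2\ge\bigl(\sum_{t=1}^{T-1}v_t\bigr)^2=R^2(Z)$, so
\[
\mathcal{L}_v(Z)=\tfrac12\sum_{t=1}^{T-1}v_t^2 \ge \frac{R^2(Z)}{2(T-1)}.
\]

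Both steps are elementary, so I do not anticipate a genuine obstacle; the two points to be careful about are (i) verifying that the identity permutation really belongs to the index set defining $\mathcal{L}_a$, which is exactly what makes $D\ge N$ and hence the log-ratio nonnegative, and (ii) using the two fixed endpoints of $\mathcal{Z}^T$ for the telescoping step. It is also worth recording that equality in the right inequality holds precisely for the equally spaced configuration, since that is the natural test point used afterwards to upper bound $R(Z^*)$.
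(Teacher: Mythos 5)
Your proposal is correct and follows essentially the same route as the paper: drop $\mathcal{L}_a$ by nonnegativity, telescope the velocities using the fixed endpoints, and bound $\sum v_t^2$ from below. The only cosmetic differences are that you make the nonnegativity of $\mathcal{L}_a$ fully explicit via the softmax/identity-permutation argument (the paper simply asserts it), and you invoke Cauchy--Schwarz directly where the paper states the minimizer of the equivalent quadratic program; both yield the same bound, tight at the uniform configuration.
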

    \begin{proposition}[Uniform Configuration Upper Bound]\label{prop:uniform.conf.upper}
        Consider the uniform configuration
        \begin{equation}\label{eq:uniform.conf}
        Z_u = \left( 0, \frac{R}{T-1}, \frac{2R}{T-1}, \ldots, R \right).
        \end{equation}
        For $R = T-1$,
        \begin{equation}\label{eq:uniform.bound}
            \mathcal{L}(Z_u) = \mathcal{O}(T \ln T).
        \end{equation}
    \end{proposition}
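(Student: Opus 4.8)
The plan is to evaluate $\mathcal{L}(Z_u) = \mathcal{L}_a(Z_u) + \mathcal{L}_v(Z_u)$ directly on the equispaced configuration and bound each summand. First I would observe that for $Z_u$ with $R = T-1$ every consecutive gap equals $1$, so the velocity vector is $V(Z_u) = (1,\dots,1)\in\RR^{T-1}$ and consequently the acceleration vector vanishes, $A(Z_u) = (0,\dots,0)\in\RR^{T-2}$. This immediately gives $\mathcal{L}_v(Z_u) = \tfrac12(T-1) = \mathcal{O}(T)$, which is dominated by the target rate, so the only term that needs attention is $\mathcal{L}_a$.

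For $\mathcal{L}_a(Z_u)$, since $\sum_{t} a_t^2(Z_u) = 0$ the numerator of the softmax-type ratio equals $\exp(0)=1$, hence $\mathcal{L}_a(Z_u) = \log\bigl(\sum_{\pi}\exp(-\tfrac12\sum_t a_t^2(Z_u^\pi))\bigr)$, where $\pi$ ranges over the $(T-1)!$ permutations of $\{2,\dots,T\}$ fixing $z_1 = 0$. Each permuted configuration has nonnegative acceleration energy $\sum_t a_t^2(Z_u^\pi)\ge 0$, so every term in the denominator is at most $1$; since there are $(T-1)!$ of them, the denominator is at most $(T-1)!$. Taking logs and using the elementary bound $\log((T-1)!)\le (T-1)\log(T-1) = \mathcal{O}(T\ln T)$ yields $\mathcal{L}_a(Z_u)\le \mathcal{O}(T\ln T)$; the identity permutation also shows the denominator is at least $1$, so $\mathcal{L}_a(Z_u)\ge 0$ and the term is genuinely of that order.

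Combining the two pieces gives $\mathcal{L}(Z_u) \le (T-1)\log(T-1) + \tfrac12(T-1) = \mathcal{O}(T\ln T)$, which is the claim. I do not expect a real obstacle here: the argument rests only on the exact cancellation of accelerations for the uniform spacing and on the crude ``partition sum has $(T-1)!$ terms, each $\le 1$'' estimate. The only points needing a little care are bookkeeping — confirming the permutation count is exactly $(T-1)!$ under the convention that $z_1$ is held fixed, and reading $\mathcal{O}(\cdot)$ as $T\to\infty$ — neither of which affects the conclusion. A sharper constant could be obtained by lower-bounding the denominator with permutations near the identity, but that refinement is unnecessary for the stated $\mathcal{O}(T\ln T)$ bound.
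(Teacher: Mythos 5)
Your proof is correct and follows essentially the same route as the paper: you exploit that the uniform spacing makes all accelerations vanish (so the numerator equals $1$), bound each denominator term by $1$ and count the $(T-1)!$ permutations, and apply $\log((T-1)!) = \mathcal{O}(T\ln T)$, while $\mathcal{L}_v(Z_u) = \tfrac{T-1}{2}$ is lower order. The only minor inaccuracy is your aside that $\mathcal{L}_a(Z_u)\ge 0$ shows the term is ``genuinely of that order'' --- nonnegativity does not establish an $\Omega(T\ln T)$ lower bound --- but this remark is extraneous to the stated $\mathcal{O}(T\ln T)$ upper bound and does not affect the proof.
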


    Assuming Proposition~\ref{prop:uniform.lower} and Proposition~\ref{prop:uniform.conf.upper},
    we will now complete the proof of Theorem~\ref{thm:scale} and provide their proofs subsequently.

    From Proposition~\ref{prop:uniform.lower} and Proposition~\ref{prop:uniform.conf.upper}, it follows that
    \[
        \frac{R^2(Z^*)}{2(T-1)} \le \mathcal{L}(Z^*) \le C_1 T \ln T,
    \]
    for some constant $C_1$.
    Rearranging, we get the desired result.

    Now we prove the auxiliary claims.
    \begin{proof}[Proof of Proposition \ref{prop:uniform.lower}]
        Since $\mathcal{L}_a(Z)$ is non-negative, we have $\mathcal{L}(Z) \ge \mathcal{L}_v(Z)$.
        Recall that $\mathcal{L}_v(Z) = \frac{1}{2} \sum_{t=1}^{T-1} v_t^2$.
%        We bound $\mathcal{L}_v(Z)$ from below by $\min_{Z\in \mathcal{Z}^T} \mathcal{L}_v(Z)$.
        For any $Z$ the velocities are non-negative and satisfy $\sum_{t}v_t = z_T-z_1 =R(Z)\coloneqq R$.
        Therefore, $\mathcal{L}_v(Z) \ge \min_{V\in \RR^{T-1}, V \ge 0, \|V\|_1=R} \frac{1}{2}\|V\|_2^2$.
        The last is a classic quadratic program with the minimizer $V_u=\frac{R}{T-1}(1,...,1)$, attaining
        the minimum $\frac{R^2}{2(T-1)}$, where these velocities are realized by the uniform configuration of the points.
        Hence, we obtain $\mathcal{L}(Z) \ge \mathcal{L}_v(Z) \ge \frac{R^2(Z)}{2(T-1)}$.
    \end{proof}
    \begin{proof}[Proof of Proposition \ref{prop:uniform.conf.upper}]
        Fix \(R = T - 1\) and consider the uniform configuration~\eqref{eq:uniform.conf}.
        We have
        \[
            \mathcal{L}(Z_u) = \underbrace{\ln \left(\sum_{\pi:\pi(1)=1} \exp\left(-\frac{R^{2}}{2(T-1)^{2}} S(\pi)\right) \right)}_{\mathcal{L}_a(Z_u)} + \underbrace{\frac{R^{2}}{2(T-1)}}_{\mathcal{L}_v(Z_u)},
        \]
        where
        \[
        S(\pi) \coloneqq \sum_{t=1}^{T-2} (\pi(t+2) - 2 \pi(t+1) + \pi(t))^2.
        \]
        The velocity term simplifies as
        \begin{equation}\label{eq:l.v.z.u}
            \mathcal{L}_v(z_u) = \frac{R^2}{2 (T-1)} = \frac{(T-1)^2}{2 (T-1)} = \frac{T-1}{2}.
        \end{equation}
        The acceleration term simplifies as
        \begin{equation}\label{eq:l.a.z.u}
            \mathcal{L}_a(z_u) = \ln \left(\sum_{\pi:\pi(1)=1} \exp\left(-\frac{1}{2} S(\pi)\right) \right) \le
            \ln((T-1)!) = \mathcal{O}(T \ln T). \qedhere
        \end{equation}
        Then by~\eqref{eq:l.v.z.u} and~\eqref{eq:l.a.z.u} $\mathcal{L}(z_u)=\mathcal{O}(T \ln T).$
    \end{proof}
\end{proof}

\end{document}